\theoremstyle{plain}
\newtheorem{theorem}{Theorem}[section]
\newtheorem{axiom}{Axiom}[section]
\newtheorem{proposition}[theorem]{Proposition}
\theoremstyle{definition}
\newtheorem{example}[theorem]{Example}
\newtheorem{definition}[theorem]{Definition}
\theoremstyle{remark}
\newcommand{\dfn}[1]{\textit{#1}}
\newcommand{\defeq}{\ensuremath{\stackrel{\mathrm{def}}{=}}}
\newcommand{\st}{\;|\;}
\renewcommand{\Re}{\mathbb{R}}
\newcommand{\Nat}{\mathbb{N}}
\renewcommand{\O}{\mathcal{O}}
\renewcommand{\L}{\mathcal{L}}
\renewcommand{\S}{\mathcal{S}}
\renewcommand{\P}{\mathbb{P}}
\newcommand{\T}{\mathbb{T}}
\newcommand{\A}{\mathcal{A}}
\newcommand{\W}{\mathcal{W}}
\newcommand{\D}{\mathcal{D}}
\newcommand{\E}{\mathbb{E}}
\newcommand{\VNMS}{VNM$\star$\xspace}
\newcommand{\VNMD}{VNM$\dag$\xspace}
\newcommand{\pge}{\succsim}
\newcommand{\pg}{\succ}
\newcommand{\peq}{\approx}
\newcommand{\ple}{\precsim}
\newcommand{\eg}[0]{\emph{e.g.},~}
\newcommand{\ie}[0]{\emph{i.e.},~}
\newenvironment{myframe}[1]
  {\mdfsetup{
    frametitle={\colorbox{white}{\space#1\space}},
    innertopmargin=5pt,
    frametitleaboveskip=-\ht\strutbox,
    frametitlealignment=\center,
    linecolor=black,
    outerlinewidth=.3pt,
    roundcorner=5pt,
    innertopmargin=\baselineskip,
    innerbottommargin=\baselineskip,
    innerrightmargin=5pt,
    innerleftmargin=5pt,
    backgroundcolor=black!0!white
    }
  \begin{mdframed}
  }
  {\end{mdframed}}
\icmltitlerunning{Utility Theory for Sequential Decision Making}
\begin{document}

\twocolumn[
\icmltitle{Utility Theory for Sequential Decision Making}

\begin{icmlauthorlist}
\icmlauthor{Mehran Shakerinava}{to,goo}
\icmlauthor{Siamak Ravanbakhsh}{to,goo}
\end{icmlauthorlist}

\icmlaffiliation{to}{School of Computer Science, McGill University, Montreal, Canada}
\icmlaffiliation{goo}{Mila - Quebec AI Institute}

\icmlcorrespondingauthor{Mehran Shakerinava}{mehran.shakerinava@mila.quebec}

\icmlkeywords{Machine Learning, ICML}

\vskip 0.3in
]

\printAffiliationsAndNotice{}  

\begin{abstract}
The von Neumann-Morgenstern (VNM) utility theorem shows that under certain axioms of rationality, decision-making is reduced to maximizing the expectation of some utility function. We extend these axioms to increasingly structured sequential decision making settings and identify the structure of the corresponding utility functions. In particular, we show that memoryless preferences lead to a utility in the form of a per transition reward and multiplicative factor on the future return. This result motivates a generalization of Markov Decision Processes (MDPs) with this structure on the agent's returns, which we call Affine-Reward MDPs. A stronger constraint on preferences is needed to recover the commonly used cumulative sum of scalar rewards in MDPs. A yet stronger constraint simplifies the utility function for goal-seeking agents in the form of a difference in some function of states that we call potential functions. Our necessary and sufficient conditions demystify the reward hypothesis that underlies the design of rational agents in reinforcement learning by adding an axiom to the VNM rationality axioms and motivates new directions for AI research involving sequential decision making.
\end{abstract}

\section{Introduction}
\label{sec:intro}

Utility theory is a proposal for rational behavior when faced with risky outcomes. 
Maximization of expected utility was originally hypothesized by Bernoulli~\citep{bernoulli_original, bernoulli_translation} as a solution to the St. Petersburg paradox, in which diminishing marginal utility explains human risk aversion in a game of chance. This hypothesis was later grounded by von Neumann and Morgenstern (VNM), such that as long as one's preferences satisfied certain rationality axioms, one's behavior could be explained as maximization of some utility function in expectation~\citep{vonneumann1947theory}. 
This paper aims to extend utility theory to sequential decision making. Our primary motivation is to ground what is known as the \emph{reward hypothesis} in reinforcement learning (RL): ``That all of what we mean by goals and purposes can be well thought of as maximization of the expected value of the cumulative sum of a received scalar signal (called reward).''~\citep{sutton2018reinforcement}.
While the connection between the reward in RL and the concept of utility in game theory 
has not gone unnoticed (\eg \citet{jaquette1976utility}), the adequacy of cumulative sum of scalar rewards still remains a hypothesis~\citep{sutton2018reinforcement}.

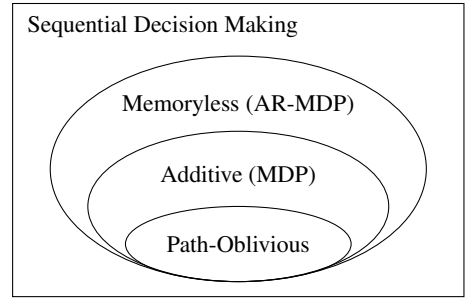
\begin{figure}
\centering
\begin{tikzpicture}[font=\small]

    \draw[fill=red!70,fill opacity=0] (-3,-1.7) rectangle (3,2.2);
    
    \draw[fill=red!70,fill opacity=0] (0,0) ellipse (2.5cm and 1.5cm);
    \draw[fill=red!70,fill opacity=0] (0,-0.5) ellipse (2cm and 1cm);
    \draw[fill=red!70,fill opacity=0] (0,-1) ellipse (1.5cm and 0.5cm);

    \node at (-1,1.9) {Sequential Decision Making};
    \node at (0,0.85) {Memoryless (AR-MDP)};
    \node at (0,-0.1) {Additive (MDP)};
    \node at (0,-1) {Path-Oblivious};
\end{tikzpicture}
\caption{
We axiomatize preference relations in sequential decision making and identify the corresponding structure for the utility function. The memorylessness, additivity, and path-obliviousness axioms impose successively stronger constraints on preferences over trajectories in sequential decision making. As a result, the corresponding utility functions become more structured. In particular, the additivity axiom leads to the additive reward structure of the MDP. Memoryless preferences lead to a more general AR-MDP, while path-oblivious preferences that only consider start and end states lead to a utility that is expressed as change in a potential function of states.
}
\label{fig:example}
\end{figure}

We identify necessary and sufficient conditions in sequential decision making that guarantee the existence of scalar reward signals, whose cumulative sum can represent any set of preferences over trajectories. This condition is presented as a single additional axiom to those of VNM, which itself justifies maximization of expected utility. Moreover, we place this particular structure of the utility function, \ie cumulative sum of scalar rewards, among several other possibilities that are based on more or less stringent assumptions.
In particular, we show that applying a memorylessness property to preference relations leads to a more general setup than Markov Decision Processes (MDPs) in which the utility is not the cumulative sum of rewards, but rather each step in addition to its additive contribution to the reward has a multiplicative effect on future rewards. We call this generalization of an MDP, an \emph{Affine-Reward MDP (AR-MDP)}. 

In the following, first, we review the foundations of utility theory in \cref{sec:background}. Then, in \cref{sec:sequential-decision-making} through \cref{sec:goal}, we consider a successively more structured setting for sequential decision making through additional axioms that constrain preferences over trajectories. We identify a one-to-one correspondence with an increasingly structured utility function in each case. These utility functions include a utility in which each state-transition has an associated additive reward and a multiplicative factor on future rewards as a result of a Markovian assumption (\cref{sec:markovian-sequential-decision-making}), commonly used cumulative rewards (\cref{sec:mdp}), and utilities that are expressed as a difference of potentials in the case of goal-seeking agents (\cref{sec:goal}). In \cref{sec:partial-preferences} we consider preference relations that have not been fully specified and how, in some cases, it is possible to uniquely complete them. \cref{sec:works} briefly reviews some of the relevant literature on utility theory. Finally, \cref{sec:discussion} discusses some implications of our results and considers possible extensions and exciting future directions that are motivated by these findings. 

\section{Background}
\label{sec:background}

We are interested in studying ``rational" decision-making. A pre-requisite to decision-making is the ability to compare outcomes through a \emph{preference relation}. But what does it mean for a preference relation to be rational? We will specify the meaning of rationality by introducing \emph{axioms} that agree with our intuitive understanding of rationality.

Let $\O$ denote the set of possible outcomes\footnote{We will assume that $\O$ is countable.}. We write $x \pge y$ if we prefer outcome $x$ to outcome $y$ (denoted as $x \succ y$) or if we are indifferent between the two outcomes (denoted as $x \peq y$). When $x \peq y$ we say that $x$ and $y$ are \dfn{equivalent}.

\begin{axiom}[Completeness]\label{ax:ordinal-complete}
For all $x, y \in \O$, $x \pge y$ or $y \pge x$, \ie any pair of outcomes are comparable.
\end{axiom}

\begin{axiom}[Transitivity]\label{ax:ordinal-transitive}
For all $x, y, z \in \O$, if $x \pge y$ and $y \pge z$, then $x \pge z$.
\end{axiom}

Note that completeness implies reflexivity (\ie for all $x \in \O$, $x \pge x$). Such preference relations are also sometimes known as a \emph{total preorder}.

Let us now consider uncertain outcomes. When a choice has an uncertain outcome, there will be a probability $p(x)$ of obtaining each outcome $x$. We will refer to such a choice as a \dfn{lottery}\footnote{A lottery is identified by a probability distribution $p$ over the outcome space $\O$, so in this sense, lotteries can be thought of as probability distributions.}. We may also consider \dfn{compound lotteries}, \ie lotteries of lotteries. Such lotteries can always be simplified into a single non-compound lottery. We will denote a general lottery of $n$ items as $\sum_{i=1}^n p(x_i)x_i$, where each $x_i$ is an outcome or a lottery. You can avoid confusing this notation with an expectation by noting that outcomes can't be multiplied and added.

\begin{example}
The lottery

\begin{equation*}
L = \frac{1}{2}x + \frac{1}{3}y + \frac{1}{6}M
\end{equation*}

means there is a $\frac{1}{2}$ chance of obtaining outcome $x$, a $\frac{1}{3}$ chance of obtaining outcome $y$, and a $\frac{1}{6}$ chance of obtaining an outcome according to another lottery $M$.
\end{example}

The framework introduced thus far does not allow us to make optimal decisions when faced with \emph{uncertain outcomes}. Suppose, for example, that there are three outcomes: $x \pg y \pg z$. When faced with a choice between $y$ and $\frac{1}{2} x + \frac{1}{2} z$ we are not able to say which choice is better. The reason is that there is a fundamental issue with a preference over outcomes: It does not specify how much we \emph{value} each outcome. For example, in this case, we know that $x$ is preferred to $y$, but how much more is it preferred?
To solve this issue, we must move to \emph{preferences over lotteries}. We will restate our current axioms to apply to lotteries and add two more axioms. Let $\L$ be the set of all lotteries of outcomes.

\begin{myframe}{VNM axioms}
\begin{axiom}[Completeness]\label{ax:completeness}
For all $L, M \in \L$, $L \pge M$ or $M \pge L$, \ie any pair of lotteries are comparable.
\end{axiom}

\begin{axiom}[Transitivity]\label{ax:transitivity}
For all $L, M, N \in \L$, if $L \pge M$ and $M \pge N$, then $L \pge N$.
\end{axiom}

\begin{axiom}[Continuity]\label{ax:continuity}
For all lotteries $L \pge M \pge N$, there exists $p \in [0, 1]$ such that $pL + (1-p)N \peq M$.
\end{axiom}

\begin{axiom}[Independence]\label{ax:independence}
For all $L, M, N \in \L$ and for all $p \in [0, 1]$,
\begin{equation}
L \pge M \iff (1 - p)L + p N \pge (1-p)M + p N.    
\end{equation}
\end{axiom}
\end{myframe}

The continuity axiom essentially states that, as the probabilities of a lottery vary, our valuation of the lottery changes smoothly.

The independence axiom can be understood by considering each compound lottery as a two-stage process. In the first stage, a coin with a probability $p$ of landing heads is tossed for picking a lottery, and in the second stage, an outcome is sampled from that lottery. If the coin lands heads, we get $N$ in both cases, but if it lands tails, we can get either $L$ or $M$. We may imagine that the coin has already been tossed, but the result has not yet been revealed to us. If the coin landed heads, our decision will not matter, but if it landed tails, we prefer to choose $L$. It seems reasonable, therefore, that $(1 - p)L + p N$ should be preferred to $(1-p)M + p N$. The same reasoning also applies in the opposite direction.
A corollary of independence is that, in a compound lottery, we can replace one lottery with another equivalent lottery, and the compound lotteries will be equivalent.

These four axioms are known as the \dfn{von Neumann-Morgenstern (VNM) axioms} and a preference relation over lotteries that satisfies these axioms is called \dfn{VNM-rational}.

It would be convenient if we could assign a value to each lottery such that comparing these values produces the same result as the preference relation.
Such a function, if it exists, can be thought of as an \emph{encoding} of its corresponding preference relation. This concept is captured by utility functions.

\begin{definition}[Utility function]
A utility function is a function $u: \L \to \mathbb{R}$, such that for all $L, M \in \L$,
\begin{equation}
L \pge M \iff u(L) \geq u(M).
\end{equation}
\end{definition}

Interestingly, VNM-rationality induces a utility function that is \emph{unique} up to positive affine transformation such that the utility of any lottery is equal to the \emph{expected} utility of its outcomes. This fact is formalized below.

\begin{mdframed}[style=MyFrame2]
\begin{theorem}[Von Neumann-Morgenstern utility theorem]\label{thm:vnm-util}
A preference relation satisfies the VNM axioms, if and only if it can be represented by a utility function such that for all lotteries with probability $p$,
\begin{equation}\label{eq:linear}
u \left ( \sum_{x \in \O} p(x)x \right ) = \sum_{x \in \O} p(x)u(x).
\end{equation}
Furthermore, this utility function is unique up to positive affine transformation.
\end{theorem}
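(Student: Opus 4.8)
The plan is to prove the three assertions—sufficiency of the representation, its existence under the VNM axioms, and uniqueness up to positive affine transformation—in that order.

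Sufficiency is the routine direction. Given a utility obeying \eqref{eq:linear}, \cref{ax:completeness,ax:transitivity} are inherited from the total order $\ge$ on $\Re$; for \cref{ax:continuity}, given $L \pge M \pge N$ the map $p \mapsto u(pL + (1-p)N) = p\,u(L) + (1-p)\,u(N)$ is continuous and attains every value in $[u(N),u(L)] \ni u(M)$, so some $p$ gives equality; and \cref{ax:independence} follows because $u\bigl((1-p)L+pN\bigr) - u\bigl((1-p)M+pN\bigr) = (1-p)\bigl(u(L)-u(M)\bigr)$ has the same sign as $u(L)-u(M)$.

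For existence I would first dispense with the degenerate case in which all lotteries are equivalent (take $u$ constant), and otherwise fix $L_0 \pl L_1$. Two lemmas, both obtained by short mixture manipulations from \cref{ax:independence,ax:continuity}, carry the argument: a \emph{monotonicity} lemma, $p L_1 + (1-p)L_0 \pge q L_1 + (1-q)L_0 \iff p \ge q$; and a \emph{calibration} lemma, for $L \pge M \pge N$ with $L \pg N$ there is a \emph{unique} $p \in [0,1]$ with $pL+(1-p)N \peq M$. I then define $u$ by calibrating every lottery against the segment from $L_0$ to $L_1$: for $L_0 \ple M \ple L_1$ let $u(M)$ be the unique $p$ with $pL_1 + (1-p)L_0 \peq M$; for $M \pg L_1$ write $L_1 \peq qM + (1-q)L_0$ and set $u(M) = 1/q$; for $M \pl L_0$ write $L_0 \peq qL_1 + (1-q)M$ and set $u(M) = -q/(1-q)$. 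Monotonicity makes this well-defined and order-representing. It then remains to verify \eqref{eq:linear}: for a two-component mixture one rewrites each component by its calibrating mixture against $L_0,L_1$, substitutes (using the stated corollary of independence that equivalent sub-lotteries may be replaced inside a compound lottery), and reads off the coefficient of $L_1$ in the simplified lottery; the general finite case follows by induction on the number of components, and passing to the countably supported lotteries in $\L$ needs a further limiting argument. Specializing to degenerate lotteries gives $u\bigl(\sum_x p(x)x\bigr) = \sum_x p(x)u(x)$.

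For uniqueness, suppose $u$ and $v$ both represent $\pge$ and satisfy \eqref{eq:linear}. Fixing $a \pl b$, set $\alpha = \bigl(v(b)-v(a)\bigr)/\bigl(u(b)-u(a)\bigr) > 0$ and $\beta = v(a) - \alpha u(a)$. For any $M$ with $a \ple M \ple b$, the calibration $M \peq pb + (1-p)a$ gives $u(M) = p\,u(b)+(1-p)u(a)$ and the analogous identity for $v$, whence $v(M) = \alpha u(M)+\beta$ by a one-line computation; the same identity extends to $M$ outside $[a,b]$ by instead writing $a$ or $b$ as a mixture of $M$ with the other reference point. I expect the main obstacle to be the bookkeeping that makes $u$ consistently defined across the three regimes around $[L_0,L_1]$—in particular checking that mixtures straddling the reference segment receive the correct value—together with promoting \eqref{eq:linear} from finite mixtures to the countably supported lotteries that $\L$ contains; the remainder is a direct if tedious unwinding of \cref{ax:independence,ax:continuity}.
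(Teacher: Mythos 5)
The paper does not prove this theorem itself---it defers to von Neumann--Morgenstern and to Maschler--Solan--Zamir---and your argument (calibrating every lottery against a fixed pair $L_0 \pl L_1$ via the continuity and independence axioms, reading off \eqref{eq:linear} by substitution of equivalent sub-lotteries, and matching affine parameters for uniqueness) is exactly the standard proof in those references, and matches the construction the paper itself sketches in the example following the theorem. The one real subtlety is the point you flag yourself: the four axioms deliver \eqref{eq:linear} for finitely supported lotteries, and promoting it to the countably supported lotteries in $\L$ genuinely requires an additional continuity-type assumption of the kind the paper only raises in its appendix (the sure-thing principle), so that ``further limiting argument'' is not free.
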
 
\end{mdframed}

\begin{proof}
See the appendix of \citet{vonneumann1953theory} for the original proof or \citet{maschler_solan_zamir_2013} for a simplified proof.
\end{proof}

Utility functions that satisfy \cref{eq:linear} are called \dfn{linear utility functions}.
A utility function that represents a VNM-rational preference relation is called a \dfn{VNM-utility}. The VNM utility theorem justifies the objective of maximizing expected utility. However, one must make sure that the utility that is being maximized is indeed a VNM-utility. 

\begin{example}
Consider the set of outcomes $\mathcal{O} = \{ \square, \circ, \triangle, \star \}$ along with a VNM-rational preference relation $\succsim$ on its set of lotteries $\mathcal{L}$. Suppose that $\square \succ \circ \succ \triangle \succ \star$.

We aim to construct a linear utility function $u$ on $\mathcal{L}$. We start by setting $u(\square) = 1$ and $u(\star) = 0$. By continuity, there exists some $p$ such that $p\ \square + (1-p)\ \star \approx \circ$, so we set $u(\circ) = p$. There also exists some $q < p$ such that $q\ \square + (1-q)\ \star \approx \triangle$, so we set $u(\triangle) = q$. Finally, we set the utility of any lottery to the expected utility of its outcomes. The constructed utility function is thus linear and one can show that it matches our preferences. Freedom in picking $u(\square)$ and $u(\star)$, as long as $u(\square) > u(\star)$, is what makes the utility function free up to positive affine transformation.

Proofs of the VNM utility theorem show the existence of a linear utility function by constructing it in the same way as this example.
\end{example}

\section{Extension to Sequential Decision Making}
\label{sec:sequential-decision-making}

We will now extend classical utility theory to sequential decision making. In this setting, an outcome will no longer depend on a single decision but on a sequence of decisions. Our model of sequential decision making consists of an agent that is interacting with a world. The world is modeled as a (countable) set of states $\S$. At each time-step $t \in \Nat$ the agent finds itself in state $s_t \in \S$ and must choose an action $a_t$ from a set $\A$ of actions; some of these actions may be illegal in state $s_t$.

We will assume that the result of an action depends only on the action and the current state (\ie Markov property). The result of an action, if legal, is to stochastically transition to a state and possibly terminate the interaction. The transition probabilities are given by $\P: \S \times \A \to \D(\S) \cup \{0\}$, where $\D(\S)$ is the space of probability distributions over $\S$ and $0$ indicates that the state-action pair is illegal. The termination probabilities are given by $\T: \S \times \A \times \S \to [0, 1]$. The tuple $(\S, \A, \P, \T)$ will be called a \dfn{Controlled Markov Process (CMP)}.

We define a \dfn{transition} to be a triplet $(s, a, s') \in \S \times \A \times \S$, with the interpretation that the agent chooses action $a$ in state $s$ and transitions to state $s'$. A \dfn{trajectory} of length $n$ is a sequence of transitions $\langle (s_i, a_i, s'_i) \rangle_{i \in \{1, ..., n\}}$ where $s'_i = s_{i+1}$ for all $i \in \{1, ..., n - 1\}$. For each state $s$, there is an empty trajectory $\epsilon_s$ that starts and ends in state $s$. We will refer to all of these empty trajectories collectively as \dfn{the empty trajectory} and denote it with $\epsilon$. The start and end state of $\epsilon$ will be clear from context.
We will use $\mathcal{T}$ as a short-hand for the set of transitions $\S \times \A \times \S$ and we will let $\mathcal{T}^*$ denote the set of finite trajectories.

In sequential decision making, the set of outcomes will be the set of finite trajectories of a CMP\footnote{The only role of a CMP in the case of utility theory is to specify which trajectories have a non-zero probability and get included in the set of outcomes. The probabilities themselves and the termination probabilities don't matter.}, \ie $\O = \mathcal{T}^*$, and preferences will be defined over \emph{lotteries of trajectories}. 

The VNM utility theorem may be applied in this setting, without any additional assumptions, to assign utilities to all finite trajectories of a CMP. However, each trajectory would then be considered as an entirely independent entity, and none of the structure of the CMP would be incorporated into the utility function. Furthermore, an optimal decision-making function would, generally, have to be a function of the agent's entire past trajectory. We will refer to a decision-making function of this general form as a \dfn{policy}. More specifically, a policy is a function $\pi: \mathcal{T}^* \to \D(\A)$.

\begin{example}
We will use the CMP seen in \cref{fig:example} as our running example. For simplicity, we will assume that preferences do not depend on the actions in a trajectory, and thus, trajectories can be written as a sequence of visited states. We also assume that $\langle s_0, \hat{s}_1, s_2 \rangle \pge \langle s_0, s_1, s_2 \rangle$ and $\langle s_2, \hat{s}_3 \rangle \pge \langle s_2, s_3 \rangle$.

With only the VNM axioms, all orderings of the trajectories are possible. For example, we could have

\begin{align*}
\langle s_0, \hat{s}_1, s_2, \hat{s}_3 \rangle &\pge \langle s_0, \hat{s}_1, s_2, s_3 \rangle, \ \text{and}\\
\langle s_0, s_1, s_2, s_3 \rangle &\pge \langle s_0, s_1, s_2, \hat{s}_3 \rangle.
\end{align*}

Then, an agent that started from state $s_0$ and is now in state $s_2$ will have to consider its past trajectory to decide if it prefers to go to state $\hat{s}_3$ or state $s_3$. 
\end{example}

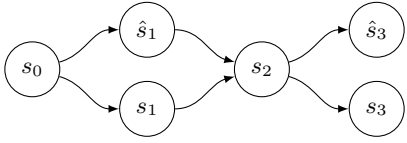
\begin{figure}
\centering
\begin{tikzpicture}[auto,node distance=8mm,>=latex,font=\small]
    \tikzstyle{round}=[draw=black,circle,minimum size=0.73cm]
    \node[round] (s0) {$s_0$};
    \node[round,above right=0mm and 10mm of s0] (s1) {$\hat{s}_1$};
    \node[round,below right=0mm and 10mm of s0] (s2) {$s_1$};
    \node[round,below right=0mm and 10mm of s1] (s3) {$s_2$};
    \node[round,above right=0mm and 10mm of s3] (s4) {$\hat{s}_3$};
    \node[round,below right=0mm and 10mm of s3] (s5) {$s_3$};
    \draw[->] (s0) to [out=+15,in=180] (s1);
    \draw[->] (s0) to [out=-15,in=180] (s2);
    \draw[->] (s1) to [out=0,in=165] (s3);
    \draw[->] (s2) to [out=0,in=195] (s3);
    \draw[->] (s3) to [out=+15,in=180] (s4);
    \draw[->] (s3) to [out=-15,in=180] (s5);
\end{tikzpicture}
\caption{The CMP that we use as our running example. We ignore actions and write trajectories as a sequence of visited states. We also assume $\langle s_0, \hat{s}_1, s_2 \rangle \pge \langle s_0, s_1, s_2 \rangle$ and $\langle s_2, \hat{s}_3 \rangle \pge \langle s_2, s_3 \rangle$.}
\label{fig:example}
\end{figure}

It seems reasonable to assume that in a Markovian world (\eg a CMP), where all the information relevant for predicting the future is contained in the present state, all the information necessary to \emph{compare} future events should also be contained in the present state. To incorporate such a Markovian assumption, we will need to constrain preference relations through an additional axiom, introduced in the next section. 

\section{Memoryless Sequential Decision Making}
\label{sec:markovian-sequential-decision-making}

Before introducing the axiom, we need to define a concatenation operator ``$\cdot$'' for trajectories. It has the property that it distributes over the addition operation used to create lotteries, that is, for all states $s$, trajectories $\tau$ and $\tau'$ that end in state $s$, and lotteries\footnote{By a lottery that starts from state $s$, we mean a lottery of trajectories that start from state $s$.} $L$ that start from state $s$, 
\begin{equation}
(p\tau + (1-p) \tau') \cdot L = p (\tau \cdot L) + (1-p) (\tau' \cdot L).
\end{equation}

We are now ready to augment the VNM axioms with the following axiom which asserts that one should be able to ignore the past trajectory when comparing future lotteries.

\begin{mdframed}[style=MyFrame]
\begin{axiom}[Memorylessness]\label{ax:memorylessness}
For all states $s$, trajectories $\tau$ that end in state $s$, and lotteries $L$ and $M$ that start from state $s$,
\begin{equation}\label{eq:memorylessness}
\tau \cdot L \pge \tau \cdot M \iff L \pge M.
\end{equation}
\end{axiom}
\end{mdframed}

We will use the shorthand, \dfn{\VNMS axioms}, for the VNM axioms along with the memorylessness axiom, and similarly for other terms that include VNM. 

\begin{example}
Consider the running example of \cref{fig:example} and the recall our   preferences: $\langle s_0, \hat{s}_1, s_2 \rangle \pge \langle s_0, s_1, s_2 \rangle$ and $\langle s_2, \hat{s}_3 \rangle \pge \langle s_2, s_3 \rangle$.
Then, memorylessness implies

\begin{align*}
\langle s_0, \hat{s}_1, s_2, \hat{s}_3 \rangle &\pge \langle s_0, \hat{s}_1, s_2, s_3 \rangle, \ \text{and}\\
\langle s_0, s_1, s_2, \hat{s}_3 \rangle &\pge \langle s_0, s_1, s_2, s_3 \rangle,
\end{align*}

however, it does not specify how $\langle s_0, \hat{s}_1, s_2, \hat{s}_3 \rangle$ compares to $\langle s_0, s_1, s_2, s_3 \rangle$. Under the \VNMS axioms, any preference is possible for these two trajectories.

As a result of the preferences implied by memorylessness, an agent that started from state $s_0$ and is now in state $s_2$, does not need to consider its past trajectory to make an optimal decision.
\end{example}

With the addition of the memorylessness axiom, the utility function is more structured, and we can encode the preferences much more succinctly by specifying only \emph{two numbers per transition}, instead of one number per trajectory. These two numbers are the reward and reward multiplier. This fact is formalized and proven below.

\begin{mdframed}[style=MyFrame2]
\begin{theorem}[\VNMS utility theorem]\label{thm:vnms-utility}
A preference relation over lotteries of finite trajectories of a CMP satisfies the \VNMS axioms, if and only if
there exists rewards $r: \mathcal{T} \to \Re$ and reward multipliers $m: \mathcal{T} \to \Re^+$, such that for all transitions $t$ and follow-up trajectories $\tau$, 
\begin{align}
    u(\epsilon) &\defeq 0\\
    u(t \cdot \tau) &\defeq r(t) + m(t)u(\tau),\label{eq:rec-affine}
\end{align}
is a linear utility function representing the given preference relation.

Moreover, $r$ is unique up to positive scaling and $m$ is unique, except for transitions that can only be followed by trajectories that are equivalent to $\epsilon$. For such transitions, $m$ can be chosen arbitrarily.
\end{theorem}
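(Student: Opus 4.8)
The statement is an ``iff'' together with a uniqueness claim, and essentially everything reduces to the Von Neumann--Morgenstern theorem (\cref{thm:vnm-util}) applied cleverly. For the ``if'' direction, suppose $r,m$ as stated exist and let $u$ be the function they determine on $\mathcal{T}^*$ by unrolling \eqref{eq:rec-affine}, extended to lotteries linearly. A short induction on the length of $\tau$ gives $u(\tau\cdot\sigma)=u(\tau)+M(\tau)\,u(\sigma)$ for every trajectory $\tau$ and compatible follow-up $\sigma$, where $M(\tau)$ is the product of $m$ along the transitions of $\tau$ (so $M(\tau)>0$, $M(\epsilon)=1$); by distributivity of ``$\cdot$'' over lottery mixing this extends to $u(\tau\cdot L)=u(\tau)+M(\tau)\,u(L)$ for lotteries $L$ starting where $\tau$ ends. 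Now $u$ is linear and (by hypothesis) represents $\pge$, so \cref{thm:vnm-util} immediately yields the VNM axioms; and $u(\tau\cdot L)\ge u(\tau\cdot M)\iff u(L)\ge u(M)$ since $M(\tau)>0$, which upon translating back through the representation is exactly the memorylessness axiom \eqref{eq:memorylessness}.

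\textbf{The main direction.} Conversely, assume $\pge$ is \VNMS-rational. By \cref{thm:vnm-util} it admits a linear utility, unique up to positive affine transformation, so fix one, $u$, normalized so that $u(\epsilon)=0$ (the empty trajectory serving as a common reference outcome). Fix a transition $t=(s,a,s')$ and let $\mathcal{L}_{s'}$ be the set of lotteries of trajectories that start in $s'$; it is closed under mixing, so the restriction of $\pge$ to it is again VNM-rational and hence has a linear utility unique up to positive affine transformation. The key observation is that \emph{two} maps are such linear utilities on $\mathcal{L}_{s'}$: the restriction $u|_{\mathcal{L}_{s'}}$, and the linearly-extended map $v:L\mapsto u(t\cdot L)$ (its linearity is distributivity of ``$\cdot$'', and $L\pge M\iff t\cdot L\pge t\cdot M\iff v(L)\ge v(M)$ by memorylessness and the representation property of $u$). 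Uniqueness then forces $v=m(t)\,u|_{\mathcal{L}_{s'}}+r(t)$ for constants $m(t)>0$ and $r(t)\in\Re$, i.e. $u(t\cdot\sigma)=r(t)+m(t)u(\sigma)$ for every $\sigma$ starting in $s'$ — precisely \eqref{eq:rec-affine}. The exceptional case is when the restricted preference is trivial, i.e. every trajectory from $s'$ is $\peq\epsilon$: then $v$ and $u|_{\mathcal{L}_{s'}}$ are both identically $0$, so $r(t)=u(t)$ and $m(t)$ may be any positive number. Carrying this out for all transitions, evaluating at $\sigma=\epsilon$ to get $r(t)=u(t)$, and running a strong induction on trajectory length shows the resulting $r,m$ reproduce $u$ through \eqref{eq:rec-affine}; since $u$ is a linear utility for $\pge$, this is what was required.

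\textbf{Uniqueness.} If $(r_1,m_1)$ and $(r_2,m_2)$ both satisfy the conclusion, the two recursively-defined utilities $u_1,u_2$ are linear and both represent $\pge$, so $u_2=\alpha u_1+\beta$ with $\alpha>0$; evaluating at $\epsilon$ forces $\beta=0$. Since $r_i(t)=u_i(t\cdot\epsilon)=u_i(t)$, we get $r_2=\alpha r_1$, so $r$ is unique up to positive scaling. For $m$: whenever some $\sigma$ starting at the end-state of $t$ has $u_1(\sigma)\ne 0$, solving \eqref{eq:rec-affine} gives $m_i(t)=\bigl(u_i(t\cdot\sigma)-r_i(t)\bigr)/u_i(\sigma)$, and the common factor $\alpha$ cancels, so $m_1(t)=m_2(t)$; the only transitions $t$ escaping this argument are exactly those whose follow-ups are all $\peq\epsilon$, and for those we already saw $m(t)$ is unconstrained.

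\textbf{Where the difficulty lies.} The crux is the middle step: realizing that $u(t\cdot\,\cdot\,)$ and $u(\cdot)$ are two linear utilities for the \emph{same} restricted preference relation, so that the VNM \emph{uniqueness} clause — not its existence clause — does the real work and hands us the per-transition affine coefficients. The accompanying subtleties are (i) that restricting a VNM-rational preference to a mixing-closed set of lotteries keeps it VNM-rational, (ii) correctly isolating the degenerate transitions (those followed only by $\epsilon$-equivalent trajectories), which is precisely the set on which $m$ is free, and (iii) checking that the independently-chosen per-transition coefficients glue into one global recursion recovering the original utility, which is a routine induction on trajectory length.
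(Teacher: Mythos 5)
Your proposal is correct and takes essentially the same route as the paper: both directions hinge on observing that, by memorylessness, $L\mapsto u(t\cdot L)$ and $u$ itself are linear utilities for the same restricted preference relation on lotteries starting at $s'$, so the VNM uniqueness clause yields the per-transition affine coefficients $r(t)$ and $m(t)$, with the same treatment of the degenerate transitions. Your write-up is somewhat more explicit than the paper's (the gluing induction and the uniqueness argument are spelled out rather than asserted), but the underlying argument is identical.
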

\end{mdframed}

\begin{proof}
We first assume that the \VNMS axioms hold and show how to construct $r$ and $m$. The VNM axioms tell us that there exists a linear utility function that is unique up to positive affine transformation. We pick one such utility function $u$ such that $u(\epsilon) = 0$, where $\epsilon$ is the empty trajectory. This $u$ is unique up to positive scaling.

Let $t = (s, a, s')$ be an arbitrary transition and Let $L$ and $M$ be any two lotteries that start from state $s'$. The memorylessness axiom tells us that preferences over lotteries that start from state $s'$ are the same as preferences over lotteries of trajectories that start with transition $t$.

We may conclude, by the VNM utility theorem, that for all lotteries $L$ that start from state $s'$, $u(t \cdot L)$ must be a positive affine transformation of $u(L)$. The parameters of this positive affine transformation give us $r(t)$ and $m(t)$. If $s'$ can only be followed by trajectories that are equivalent to $\epsilon$, then $u(L) = 0$ and $m(t)$ can be chosen arbitrarily, otherwise, $m(t)$ is unique because scaling $u$ does not change $m$. Scaling $u$, scales $r$ correspondingly.

We now show that $u$ satisfies the \VNMS axioms. Because $u$ is a linear utility function, the VNM utility theorem tells us that it satisfies the VNM axioms. The memorylessness axiom is also satisfied because prepending a trajectory to a lottery results in a positive affine transformation of its utility according to repeated application of \cref{eq:rec-affine} and this transformation preserves ordering.
\end{proof}

\cref{thm:vnms-utility} motivates the definition of what we call Affine-Reward MDP (AR-MDP).
An \dfn{AR-MDP} is a CMP combined with a reward function $r: \mathcal{T} \to \Re$ that assigns a scalar to each transition and a reward multiplier function $m:\mathcal{T} \to \Re^+$. The \emph{return} or utility associated with a trajectory $t \cdot \tau$ is recursively defined as
\begin{align}
    u(t \cdot \tau) = r(t) + m(t) u(\tau),
\end{align}
where $u(\epsilon) = 0$.

When given a CMP and a preference relation $\succsim$, our ultimate goal is to find an \dfn{optimal policy}, \ie a policy that achieves maximum expected utility. The memorylessness axiom, along with the Markov property, guarantee that there exists an optimal policy that depends only on the current state. We will refer to such a policy as a \dfn{memoryless policy}. More specifically, a memoryless policy is a function $\pi: \S \to \D(\A)$. Before formalizing this statement, we will need to briefly discuss how to compare policies.

We first define the utility of an \emph{infinite} trajectory $\tau$ to be $\lim_{T \to \infty} u(\tau_{:T})$, where $\tau_{:T}$ is the trajectory consisting of the first $T$ transitions of $\tau$. The \dfn{value} of a (general) policy $\pi$ in state $s$ is then defined as $v^\pi(s) \defeq \E_\pi[u(\boldsymbol{\tau}) \st s]$, where $\boldsymbol{\tau}$ is a random variable denoting the infinite trajectory taken by an agent starting from state $s$ and following policy $\pi$. Policy $\pi_1$ is preferred to policy $\pi_2$ in state $s$ if $v^{\pi_1}(s) > v^{\pi_2}(s)$. A complication is that the limit might not exist and so, we may not be able to compare some policies. To avoid this problem, we assume in the following proposition that the limit exists.

\begin{proposition}\label{thm:policy-opt}
Given a CMP $\W$ and a \VNMS preference relation over lotteries of all finite trajectories of $\W$ such that $v^\pi(s)$ exists for all policies $\pi$ and all states $s$, there exists an optimal policy that is memoryless.
\end{proposition}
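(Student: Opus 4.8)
The plan is to reduce the proposition to a dynamic‑programming (Bellman‑optimality) argument, with the affine recursion of \cref{thm:vnms-utility} supplying the structure that makes ``what is still achievable from here'' a function of the current state alone.

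First I would fix, via \cref{thm:vnms-utility}, rewards $r$ and multipliers $m$ with $u(\epsilon)=0$ and $u(t\cdot\tau)=r(t)+m(t)u(\tau)$, extending $u$ to infinite trajectories by the hypothesised limit. Iterating the recursion, for any finite history $h$ ending in a state $s$ and any continuation $\tau$ from $s$,
\begin{equation}
u(h\cdot\tau)=u(h)+M(h)\,u(\tau),\qquad M(h)\defeq\textstyle\prod_{t\in h}m(t)>0,
\end{equation}
hence $\E[u(h\cdot\boldsymbol{\tau})]=u(h)+M(h)\,\E[u(\boldsymbol{\tau})]$ for any random continuation $\boldsymbol{\tau}$. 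Since $M(h)>0$, the ranking of continuation policies by expected continuation utility from $s$ does not depend on the prefix $h$, and by the Markov property neither does the set of available continuations. Consequently, when optimising a general policy, every history that reaches $s$ faces the same sub‑problem, and its best attainable continuation value equals $v^*(s)\defeq\sup_{\pi}v^\pi(s)$. Modelling a terminated trajectory as one that has entered an absorbing state with reward $0$ and multiplier $1$ (for which $v^*=0$), a standard exchange of $\sup$ with the one‑step expectation then shows that $v^*$ solves the Bellman optimality equation
\begin{equation}\label{eq:bellman-opt}
v^*(s)=\sup_{a}\ \E_{s'\sim\P(\cdot\mid s,a)}\big[\,r(s,a,s')+m(s,a,s')\,v^*(s')\,\big],
\end{equation}
with the supremum over actions legal in $s$.

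Next I would let $\pi^\star$ be the memoryless policy that in each state $s$ plays an action attaining the supremum in \cref{eq:bellman-opt} (this maximiser exists when each state has finitely many legal actions; otherwise replace it by a near‑optimal choice and carry an $\varepsilon$ through the limit below). Substituting \cref{eq:bellman-opt} with equality once per step, an induction on the horizon $T$ gives, for every state $s$,
\begin{equation}
v^*(s)=\E_{\pi^\star}\!\big[\,u(\boldsymbol{\tau}_{:T})+M(\boldsymbol{\tau}_{:T})\,v^*(S_T)\,\big|\,s\,\big],
\end{equation}
where $S_T$ is the state at the end of the length‑$T$ prefix $\boldsymbol{\tau}_{:T}$. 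Sending $T\to\infty$, the first term converges to $v^{\pi^\star}(s)$ by the hypothesis that values exist, and once the tail term $\E_{\pi^\star}[M(\boldsymbol{\tau}_{:T})\,v^*(S_T)]$ is shown to vanish we get $v^{\pi^\star}(s)=v^*(s)$; since $v^*(s)=\sup_\pi v^\pi(s)\ge v^\pi(s)$ for every policy $\pi$ by definition, $\pi^\star$ is optimal from every state and is memoryless, which is the claim.

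The step I expect to be the real obstacle is the infinite‑horizon passage: a finite‑horizon version of the argument is routine backward induction, and all the content lies in pushing it through $T\to\infty$. The assumption that $v^\pi(s)$ exists as a genuine real limit for every $\pi$ and $s$ is exactly what is meant to license interchanging the limit with $\E_{\pi^\star}$ and to force the tail term $\E_{\pi^\star}[M(\boldsymbol{\tau}_{:T})\,v^*(S_T)]$ to $0$; I would try to control it using the tails of the convergent series $u(\boldsymbol{\tau})=\sum_i\big(\prod_{j<i}m(t_j)\big)r(t_i)$ along $\pi^\star$‑trajectories. A secondary issue is making sure that $v^*$ is real‑valued and that the supremum in \cref{eq:bellman-opt} is attained; failing finiteness of the action set, one would add a mild compactness/continuity condition, or retreat to the weaker conclusion that $\sup_\pi v^\pi(s)$ is already attained within the class of memoryless policies.
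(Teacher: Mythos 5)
Your argument rests on the same key observation as the paper's: by \cref{thm:vnms-utility}, prepending a history $h$ acts on the continuation utility as the positive affine map $x \mapsto u(h) + M(h)\,x$ with $M(h) = \prod_{t \in h} m(t) > 0$, so the continuation problem faced at a state $s$ is the same for every history reaching $s$ (and, by the Markov property, so is the set of available continuations). Where you diverge is in how you turn this into a memoryless optimal policy. The paper argues directly: it fixes, for each state $s$, an optimal policy $\pi^\star_s$ for the from-scratch problem at $s$, uses the affine invariance to conclude that $\pi^\star_s$ remains the optimal continuation after \emph{any} history ending at $s$, and assembles the memoryless policy as $\pi^\star(s) \defeq \pi^\star_s(\epsilon_s)$. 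You instead introduce the optimal value $v^*$, a Bellman optimality equation, a greedy policy, and a horizon-truncation argument to verify $v^{\pi^\star} = v^*$.

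That detour incurs a genuine obligation you flag but do not discharge: the vanishing of the tail term $\E_{\pi^\star}\bigl[M(\boldsymbol{\tau}_{:T})\,v^*(S_T)\bigr]$. This does not follow from the stated hypothesis that $v^\pi(s)$ exists for every $\pi$ and $s$ --- that hypothesis controls $\E_\pi[u(\boldsymbol{\tau}_{:T})]$ along each fixed policy but says nothing about the multiplier product $M(\boldsymbol{\tau}_{:T})$, which can diverge when multipliers exceed $1$ (e.g.\ a zero-reward cycle with $m \equiv 2$ keeps every $v^\pi$ finite while $M(\boldsymbol{\tau}_{:T}) \to \infty$), so the product with $v^*(S_T)$ needs a separate argument. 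Your route also requires $v^*$ to be real-valued and the Bellman supremum to be attained, which you acknowledge. In fairness, the paper's proof is not watertight at the analogous points --- it presupposes that each $\pi^\star_s$ exists, and the step from ``the first action of $\pi^\star_s$ is optimal at every decision point'' to ``always taking such first actions is optimal'' is the same stitching issue you face --- but it never manufactures the tail-term obligation in the first place. The cleanest repair is to follow the paper's shape: take the per-state optimal policies as given, use the positive-affine invariance to show the history is irrelevant to the argmax over continuations, and define the memoryless policy from their first actions, rather than routing through $v^*$ and the Bellman fixed point.
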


\begin{proof}
Let $\pi^\star_s$ be an optimal policy starting from state $s$. Consider an agent that has arrived in state $s$ via trajectory $\tau$. The goal of the agent is to find $\arg \max_\pi \E_\pi[u(\tau \cdot \boldsymbol{\tau'}) \st \tau]$ where $\boldsymbol{\tau'}$ is a random variable representing the future trajectory. Using the \VNMS Theorem one can see that the objective is equivalent to
\begin{align*}
    & \arg \max_\pi \E_\pi[u(\tau) + m(\tau) u(\boldsymbol{\tau'}) \st \tau]\\
   =& \arg \max_\pi \E_\pi[u(\boldsymbol{\tau'}) \st \tau] & (m(\tau) > 0)\\
   =& \arg \max_\pi \E_\pi[u(\boldsymbol{\tau'}) \st s] & (\mathrm{Markov\ property})\\
   =& \arg \max_\pi v^\pi(s)\\
   =& \pi^\star_s,
\end{align*}
where $m(\tau) = \prod_{t \in \tau} m(t)$. Therefore, the optimal action for the agent is given by $\pi^\star_s(\epsilon_s)$. This observation is true for all states, therefore, $\pi^\star(s) \defeq \pi^\star_s(\epsilon_s)$ is a memoryless policy that is simultaneously optimal for all states.
\end{proof}

\section{An Axiom for Markov Decision Processes}\label{sec:mdp}

An \dfn{MDP} is a CMP combined with a reward function $r: \mathcal{T} \to \Re$ that assigns a scalar to each transition. In an MDP, the utility of a trajectory $\tau$ is evaluated as $u(\tau) = \sum_{t \in \tau} r(t)$.\footnote{This definition of MDPs slightly differs from the usual definition which does not include termination probabilities $\T$. We also do not include a discount factor $\gamma$, but it is easy to simulate one by modifying the termination probabilities as $\T_\mathrm{new} = 1 - \gamma(1 - \T_\mathrm{old})$.}

Since the \VNMS axioms incorporate a Markovian property, one might expect them to correspond to MDPs, but as we saw, this is not the case, and instead, we arrived at AR-MDPs which are more general than MDPs: If we set $m(t) = 1$ for all transitions $t$, then we arrive at MDPs. An important question, therefore, arises: \emph{What additional assumptions do MDPs make about preferences?} The additional axiom corresponding to an MDP is what we call additivity.

\begin{mdframed}[style=MyFrame]
\begin{axiom}[Additivity]\label{additivity}
For all states $s$, trajectories $\tau_1$ and $\tau_2$ that end in state $s$, lotteries $L$ and $M$ that start from state $s$, lotteries $N$ and $K$, and $p \in [0, 1]$,

\begin{align}
     & p(\tau_1 \cdot L) + (1 - p) N \pge p (\tau_1 \cdot M) + (1 - p) K \notag\\
\iff & p(\tau_2 \cdot L) + (1 - p) N \pge p (\tau_2 \cdot M) + (1 - p) K. 
\end{align}

\end{axiom}
\end{mdframed}

This axiom is similar to memorylessness in the sense that it requires that changing the initial trajectory of two lotteries should maintain preference relations. The difference with memorylessness is that, here, we are allowed to change the initial trajectory of equal-probability sub-lotteries, which makes this axiom \emph{stronger than memorylessness}. To arrive at memorylessness, let $\tau_2 = \epsilon_s$ and $N = K$, and use independence (\cref{ax:independence}) to remove $N$ and $K$ from the comparison.

The additivity axiom is somewhat difficult to interpret. One of the ways to better understand it is through its implications: if parts of a trajectory are known and fixed and some parts are unknown and must be optimized, then additivity says that each part can be optimized independently and the known parts of the trajectory can be entirely ignored. It might be easy to check if such an assumption holds for a given task.\footnote{Note that this implication is not a sufficient condition for additivity to hold.}

We will use the shorthand, \dfn{VNM+ axioms}, for the VNM axioms along with the additivity axiom, and similarly for other terms that include VNM.

\begin{mdframed}[style=MyFrame2]
\begin{theorem}[VNM+ utility theorem]\label{thm:vnm+-utility}
A preference relation over lotteries of finite trajectories of a CMP satisfies the VNM+ axioms, if and only if
there exists a reward function $r: \mathcal{T} \to \Re$, such that for all transitions $t$ and follow-up trajectories $\tau$,
\begin{align}
    u(\epsilon) &\defeq 0\\
    u(t \cdot \tau) &\defeq r(t) + u(\tau),\label{eq:rec-add}
\end{align}
is a linear utility function representing the given preference relation.
Moreover, $r$ is unique up to positive scaling.
\end{theorem}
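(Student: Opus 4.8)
The plan is to piggyback on the already-proved \VNMS utility theorem (\cref{thm:vnms-utility}) and show that the additivity axiom is exactly what forces the reward multiplier $m$ to be trivial. For the \emph{sufficiency} direction, suppose $r$ is given and $u$, defined by $u(\epsilon)\defeq 0$ and $u(t\cdot\tau)\defeq r(t)+u(\tau)$, is a linear utility function representing $\pge$. Iterating the recursion gives $u(\tau\cdot L)=u(\tau)+u(L)$ for any trajectory $\tau$ and any lottery $L$ it concatenates with. The VNM axioms follow from \cref{thm:vnm-util} since $u$ is linear. For additivity, I would substitute $u$ into both sides of \cref{additivity}: using linearity of $u$ and the identity above, the left-hand preference becomes $p(u(L)-u(M))\ge(1-p)(u(K)-u(N))$ after the $p\,u(\tau_1)$ terms cancel, and the right-hand preference reduces to the \emph{identical} condition (now the $p\,u(\tau_2)$ terms cancel). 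So the equivalence holds trivially and additivity is satisfied.

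For the \emph{necessity} direction, assume the VNM+ axioms. As observed after \cref{additivity}, additivity implies memorylessness, so the \VNMS utility theorem applies: there exist $r$ and $m$ with $u(\epsilon)=0$ and $u(t\cdot\tau)=r(t)+m(t)u(\tau)$ a linear utility for $\pge$, and iterating gives $u(\tau\cdot L)=u(\tau)+m(\tau)u(L)$ with $m(\tau)=\prod_{t\in\tau}m(t)$. It remains to show $m$ can be taken identically $1$. Fix a transition $t=(s_0,a,s_1)$. If every trajectory starting in $s_1$ is equivalent to $\epsilon$, then $u$ vanishes on such trajectories and \cref{thm:vnms-utility} already lets us choose $m(t)$ freely, so set $m(t)=1$ without changing $u$. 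Otherwise, apply \cref{additivity} with $\tau_1=t$ and $\tau_2=\epsilon_{s_1}$ (both end in $s_1$), with $L,M$ ranging over lotteries from $s_1$ and $N,K$ over arbitrary lotteries. Translating through $u$ exactly as in the sufficiency part, the left preference becomes $p\,m(t)\,d\ge(1-p)e$ and the right becomes $p\,d\ge(1-p)e$, where $d\defeq u(L)-u(M)$ and $e\defeq u(K)-u(N)$, and these must be equivalent for all admissible $p,L,M,N,K$. Since $s_1$ is not a dead end, fix $L_0,M_0$ from $s_1$ with $u(L_0)>u(M_0)$; then mixtures $qL_0+(1-q)M_0$ (together with continuity) realize every sufficiently small positive value of $d$, and similarly $e$ ranges over a neighborhood of $0$. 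Taking $p=\tfrac12$, the equivalence reads $m(t)d\ge e \iff d\ge e$; if $m(t)\ne 1$, choosing $d$ small and $e$ strictly between $m(t)d$ and $d$ (which lies in the realizable range once $d$ is small) makes exactly one side hold, a contradiction. Hence $m(t)=1$ for every transition, and the same $u$ now satisfies $u(t\cdot\tau)=r(t)+u(\tau)$.

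Uniqueness is inherited from \cref{thm:vnms-utility}, whose $r$ is unique up to positive scaling and is the very $r$ produced above; alternatively, argue directly that any additive linear utility $u'$ with $u'(\epsilon)=0$ is a positive affine transform of $u$ by \cref{thm:vnm-util}, and the normalization at $\epsilon$ kills the additive constant, so $u'=\alpha u$ with $\alpha>0$ and thus $r'=\alpha r$.

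The main obstacle is the necessity direction, specifically making the quantifier argument that pins $m(t)$ to $1$ fully rigorous: one must check that enough values of $u(L)-u(M)$ and $u(K)-u(N)$ are actually realizable (via mixtures and continuity) to separate $m(t)d$ from $d$, and one must cleanly dispose of the degenerate states whose follow-up trajectories are all equivalent to $\epsilon$, where $m(t)$ is genuinely unconstrained and is simply fixed to $1$ by convention.
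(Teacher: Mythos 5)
Your proof is correct, and its overall architecture matches the paper's: reduce to the \VNMS utility theorem and show that additivity forces $m\equiv 1$; your sufficiency argument (cancellation of the $p\,u(\tau_i)$ terms) is essentially the paper's. The one place you genuinely diverge is the key step pinning down $m(t)=1$. You exploit the full quantifier strength of \cref{additivity} with $\tau_1=t$, $\tau_2=\epsilon_{s'}$ and arbitrary $L,M,N,K$, obtaining the condition $m(t)d\ge e \iff d\ge e$ and then running a separation argument that requires showing a continuum of gaps $d=u(L)-u(M)$ and $e=u(K)-u(N)$ is realizable via mixtures. The paper instead applies additivity to a single trivially equivalent pair, $\tfrac12\tau+\tfrac12\epsilon_{s'} \peq \tfrac12\epsilon_{s'}+\tfrac12\tau$, prepending $t$ to the $\tau_1$-component on each side to get $\tfrac12(t\cdot\tau)+\tfrac12\epsilon_{s'} \peq \tfrac12 t+\tfrac12\tau$, which upon taking utilities yields the exact equation $m(t)u(\tau)=u(\tau)$ and hence $m(t)=1$ once $u(\tau)\neq 0$. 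The paper's instance is cleaner: it needs only one non-$\epsilon$-equivalent follow-up trajectory and no realizability or density discussion, whereas your route carries the burden (which you do discharge) of verifying that enough values of $d$ and $e$ are attainable to separate $m(t)d$ from $d$. Both treat the degenerate transitions identically, and your uniqueness argument is fine.
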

\end{mdframed}

\begin{proof}
We first assume that the VNM+ axioms hold and we show that it is possible to obtain utilities as in \cref{eq:rec-add}. Since the VNM+ axioms imply the \VNMS axioms, the \VNMS utility theorem lets us specify utilities as in \cref{eq:rec-affine} via functions $r$ and $m$. We will show that when the additivity axiom holds, we can set $m(t) = 1$ for all transitions $t$.

For transitions $t$ that can only be followed by trajectories that are equivalent to $\epsilon$, $m(t)$ can be chosen arbitrarily, so we can set it to $1$.
Let $t = (s, a, s')$ be an arbitrary transition among the remaining transitions. We will show that $m(t) = 1$. Let $\tau$ be a trajectory following $t$ which is not equivalent to $\epsilon$.
\begin{align*}
 &\frac{1}{2}\tau + \frac{1}{2}\epsilon_{s'} \peq \frac{1}{2}\epsilon_{s'} + \frac{1}{2}\tau \\
 \implies& \frac{1}{2}(t \cdot \tau) + \frac{1}{2}\epsilon_{s'} \peq \frac{1}{2}t + \frac{1}{2}\tau  & \text{(additivity)} \\
 \implies& \frac{1}{2}(r(t) +  m(t)u(\tau)) = \frac{1}{2}u(\tau) + \frac{1}{2}r(t) \\
 \implies& m(t)u(\tau) = u(\tau)\\
 \implies& m(t) = 1 & (u(\tau) \neq 0)\\
\end{align*}

We now show that $u$ satisfies the VNM+ axioms. If we let $m(t) = 1$ for all transitions $t$, we see that by the \VNMS utility theorem, $u$ satisfies the VNM axioms. It remains to show that $u$ satisfies the additivity axiom. Because utilities are additive (\cref{eq:rec-add}), changing an initial trajectory adds the utility difference of the old and new trajectories to the utility of the lottery, thus, ordering is preserved.
\end{proof}

\begin{example}
Consider the running example of \cref{fig:example} and recall our partial preference assumptions:
$\langle s_0, \hat{s}_1, s_2 \rangle \pge \langle s_0, s_1, s_2 \rangle$ and $\langle s_2, \hat{s}_3 \rangle \pge \langle s_2, s_3 \rangle$.
With additivity axioms our preference between
 $\langle s_0, \hat{s}_1, s_2, \hat{s}_3 \rangle$ and
 $\langle s_0, s_1, s_2, s_3 \rangle$ is now constrained. In contrast, memorylessness does not constrain this preference. 

To see this, note that one of the implications of \cref{thm:vnm+-utility} is that, for all trajectories $\tau_1, \tau_2, \hat{\tau}_1$, and $\hat{\tau}_2$, such that $\tau_2$ follows $\tau_1$ and $\hat{\tau}_2$ follows $\hat{\tau}_1$,
\begin{equation}\label{eq:dominance}
    \hat{\tau}_1 \pge \tau_1 \ \text{and}\  \hat{\tau}_2 \pge \tau_2 \implies \hat{\tau}_1 \cdot \hat{\tau_2} \pge \tau_1 \cdot \tau_2.
\end{equation}

Now, our preference assumptions along with \cref{eq:dominance} imply that $\langle s_0, \hat{s}_1, s_2, \hat{s}_3 \rangle \pge \langle s_0, s_1, s_2, s_3 \rangle$.
\end{example}

\section{Goal-Seeking Sequential Decision Making}
\label{sec:goal}

In many settings the objective is to reach the best possible state, \ie the means of achieving something do not matter, all that matters is the final result. Some examples are chess, freestyle swimming, 
and tennis. Not all sports fall into this category. In sports such as gymnastics, figure skating, and diving, \emph{how} the task is performed (\ie the entire trajectory) matters. We will introduce an axiom to account for such settings.

\begin{mdframed}[style=MyFrame]
\begin{axiom}[Path-obliviousness]\label{ax:oblivious}
For all $p \in [0, 1]$, states $s$ and $\tilde{s}$, lotteries $L,M,\tilde{L},\tilde{M},N$ and $K$, such that $L$ and $M$ start from state $s$, $\tilde{L}$ and $\tilde{M}$ start from state $\tilde{s}$, and the final-state distribution of $\tilde{L}$ and $\tilde{M}$ is the same as that of $L$ and $M$, respectively,
\begin{align}
      p L + (1 - p) N  & \pge p M + (1 - p) K \notag \\
\iff  p \tilde{L} + (1 - p) N & \pge p \tilde{M} + (1 - p) K.
\end{align}
\end{axiom}
\end{mdframed}

This axiom resembles additivity in the sense that changing the initial trajectory of equal-probability sub-lotteries preserves ordering. Here, however, we are allowed to change the starting state and entire trajectories as long as the final-state distribution stays the same. \emph{Path-obliviousness is stronger than additivity}. To see this, note that letting $L = \tau_1 \cdot \hat{L}$, $\tilde{L} = \tau_2 \cdot \hat{L}$, $M = \tau_1 \cdot \hat{M}$, and $\tilde{M} = \tau_2 \cdot \hat{M}$ recovers additivity.

Additionally, path-obliviousness implies that two trajectories that start from the same state and end in the same state are equivalent. Let $\tau_1$ and $\tau_2$ be two such trajectories. Then let $L = \tilde{M} = \tau_1$, $M = \tilde{L} = \tau_2$, and $N = K$, and use independence to remove $N$ and $K$ to obtain $\tau_1 \pge \tau_2 \iff \tau_2 \pge \tau_1$, which implies $\tau_1 \peq \tau_2$.

We will use the shorthand, \dfn{\VNMD axioms}, for the VNM axioms along with the path-obliviousness axiom, and similarly for other terms that include VNM.

\begin{example}
Consider the running example of \cref{fig:example}. The inclusion of the path-obliviousness axiom will constrain the preferences even further, \eg $\langle s_0, \hat{s}_1, s_2 \rangle \peq \langle s_0, s_1, s_2 \rangle$ and $\langle s_0, \hat{s}_1, s_2, s_3 \rangle \peq \langle s_0, s_1, s_2, s_3 \rangle$. As a result, assuming that the CMP does not terminate after one step, the action of the agent in state $s_0$ does not matter because the agent will eventually end up in $s_2$ and all trajectories that go from $s_0$ to $s_2$ have the same utilities and utilities are additive.

\end{example}

For the theorem that we are about to introduce, we will make the simplifying assumption that there exists a state $s_0$ from which all of the states of the CMP are reachable.

\begin{mdframed}[style=MyFrame2]
\begin{theorem}[\VNMD utility theorem]\label{thm:vnmd-utility}
A preference relations over lotteries of finite trajectories of a CMP $\W$, in which all states are reachable from some state $s_0$, satisfies the \VNMD axioms, if and only if
there exists a function $\phi: \S \to \Re$ such that for all states $s$ and $s'$, and trajectories $\tau$ starting from state $s$ and ending in state $s'$,
\begin{align}
    u(\epsilon) &\defeq 0\\
    u(\tau) &\defeq \phi(s') - \phi(s),\label{eq:util-pot}
\end{align}
is a linear utility function representing the given preference relation. Moreover, $\phi$, called the \emph{potential} function, is unique up to positive affine transformation.
\end{theorem}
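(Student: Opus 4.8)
The plan is to derive both directions from the already-proved VNM+ utility theorem (\cref{thm:vnm+-utility}), exploiting the fact — noted just after \cref{ax:oblivious} — that path-obliviousness is strictly stronger than additivity, together with the two consequences of path-obliviousness recorded there: additivity is implied, and any two trajectories sharing a start state and an end state are equivalent.

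\textbf{Necessity.} Assuming the \VNMD axioms, I would first apply \cref{thm:vnm+-utility} to obtain a reward function $r:\mathcal{T}\to\Re$ and the associated linear utility $u$ with $u(\epsilon)=0$ and $u(t\cdot\tau)=r(t)+u(\tau)$; in particular $u$ is additive over concatenation, $u(\tau_1\cdot\tau_2)=u(\tau_1)+u(\tau_2)$ whenever $\tau_2$ follows $\tau_1$. Using the reachability hypothesis and the equivalence of same-endpoint trajectories, I would define $\phi(s)\defeq u(\sigma)$ for an arbitrary trajectory $\sigma$ from $s_0$ to $s$; this is well defined (any two such $\sigma$ are equivalent) and $\phi(s_0)=u(\epsilon)=0$. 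Then for an arbitrary trajectory $\tau$ from $s$ to $s'$, fix a trajectory $\sigma$ from $s_0$ to $s$: the concatenation $\sigma\cdot\tau$ runs from $s_0$ to $s'$, so path-obliviousness forces $u(\sigma\cdot\tau)=\phi(s')$, while additivity gives $u(\sigma\cdot\tau)=u(\sigma)+u(\tau)=\phi(s)+u(\tau)$; subtracting yields $u(\tau)=\phi(s')-\phi(s)$ as claimed, and $u$ is a linear utility representing $\pge$ since it is the one produced by \cref{thm:vnm+-utility}.

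\textbf{Sufficiency.} Conversely, suppose such a $\phi$ exists and the induced $u$ (linearly extended to lotteries) represents $\pge$. By \cref{thm:vnm-util} the VNM axioms already hold, so only path-obliviousness must be checked. The key observation is that for a lottery $L$ whose trajectories all start from state $s$, linearity gives $u(L)=\E_{\tau\sim L}[\phi(\mathrm{end}(\tau))]-\phi(s)$, so $u(L)$ depends on $L$ only through $\phi(s)$ and the final-state distribution of $L$. Hence if $\tilde L$ starts from $\tilde s$ with the same final-state distribution as $L$, then $u(\tilde L)=u(L)+\phi(s)-\phi(\tilde s)$, and likewise $u(\tilde M)=u(M)+\phi(s)-\phi(\tilde s)$ with the \emph{same} constant; consequently $u(p\tilde L+(1-p)N)$ exceeds $u(p\tilde M+(1-p)K)$ iff $u(pL+(1-p)N)$ exceeds $u(pM+(1-p)K)$, which is exactly the path-obliviousness equivalence. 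For uniqueness, normalizing $u(\epsilon)=0$ leaves the representing linear utility unique up to positive scaling (by \cref{thm:vnm-util}); if $\phi_1,\phi_2$ give representing utilities $u_1=au_2$ with $a>0$, then $\phi_1(s')-\phi_1(s)=a(\phi_2(s')-\phi_2(s))$ for every admissible $\tau$, and specializing $s=s_0$ together with reachability of all states yields $\phi_1=a\phi_2+b$ on all of $\S$.

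\textbf{Main obstacle.} I expect the delicate part to be the necessity direction: building $\phi$ requires combining additivity of $u$ along concatenations (available only indirectly, via \cref{thm:vnm+-utility}), the endpoint-only dependence of trajectory utilities, and the $s_0$-reachability assumption to fix a consistent global reference point — and then verifying that these together force $u(\tau)=\phi(s')-\phi(s)$ for \emph{every} trajectory, not merely those passing through $s_0$. The sufficiency and uniqueness parts are essentially routine once the observation that $u(L)$ depends only on the start state and the final-state distribution is isolated.
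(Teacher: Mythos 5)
Your proposal is correct and follows essentially the same route as the paper's proof: invoke the VNM+ utility theorem via the fact that path-obliviousness implies additivity, define $\phi(s)$ as the utility of an arbitrary trajectory from $s_0$ to $s$ (well-defined by the equivalence of same-endpoint trajectories), and derive $u(\tau)=\phi(s')-\phi(s)$ by additivity along a concatenation through $s_0$. Your sufficiency and uniqueness arguments are just somewhat more detailed versions of the paper's.
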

\end{mdframed}

\begin{proof}
We first assume that the \VNMD axioms hold. Since path-obliviousness implies additivity, we may invoke the VNM+ utility theorem to obtain additive utilities that are unique up to positive scaling. We will now construct the function $\phi$. We set $\phi(s_0)$ to an arbitrary value and for any other state $s$, we pick an arbitrary trajectory $\tau$ that goes from state $s_0$ to state $s$ and set $\phi(s) = \phi(s_0) + u(\tau)$. The choice of trajectory does not matter because of the path-obliviousness axiom; thus, $\phi(s)$ is well-defined. In this way, we have constructed the potential function $\phi$. Because we are free in choosing $\phi(s_0)$ and the positive scaling of the utilities, $\phi$ is unique up to positive affine transformation.

Next, we show that utilities can be obtained from this potential function $\phi$. Let $\tau$ be an arbitrary trajectory starting from state $s$ and ending in state $s'$ and let $\tau_0$ be any trajectory that goes from state $s_0$ to state $s$.
\begin{align*}
\phi(s) - \phi(s_0) + u(\tau) &= u(\tau_0) + u(\tau) \\
 &= u(\tau_0 \cdot \tau) \\
 &= \phi(s') - \phi(s_0) \\
 \implies u(\tau) = \phi(s') - \phi(s)&
\end{align*}

We now assume that utilities can be obtained from a potential function $\phi$. It is easy to see that utilities are additive in this case. Therefore, the VNM+ axioms must hold. Also, path-obliviousness holds, since changing the starting state of a trajectory from state $s$ to state $s'$ changes the utility by $\phi(s') - \phi(s)$ which preserves ordering.
\end{proof}

\section{Related Works}
\label{sec:works}

Until the mid-twentieth century, utility theory relied on preference structures that did not explicitly incorporate uncertainty or probability. Specifying assumptions for characterizing rational behavior under uncertainty began, in a sense, with the classical paper of \citet{bernoulli_original} and was later developed and formalized in large part due to \citet{ramsey1926}, \citet{de1937prevision}, \citet{vonneumann1947theory} and \citet{savage1954foundations}. These works sparked renewed interest in the role of uncertainty in preference structures.

We have focused on the utility theory developed in~\citep{vonneumann1947theory} as the basis for our work. Their work focuses mainly on a game-theoretical setting, as opposed to a general sequential decision making setting, and preferences were applied to entire plays of a game to show that it is possible to assign utilities such that optimal behavior corresponds to maximizing expected utility. Such an approach has become standard in game theory; see, for example~\citep{maschler_solan_zamir_2013}. 
In games that have a sequential nature, it is common to assume that the game eventually terminates and that the final state determines the outcome of the game. In such cases, preferences are applied to the terminal states. 
Expected utility theory would then, for example, justify the use of an algorithm such as {Expectiminimax}~\citep{michie1966game, russell_artificial_1995} 

in a non-deterministic two-player zero-sum game.
Such an approach does not apply to a general sequential decision making setting because, in many scenarios, the entire trajectory should be evaluated, not only the final state, and in some scenarios, the interaction might never terminate.

There are many works that have considered extensions of utility theory to the setting where the set of outcomes has the structure of a product space~\citep{debreu1959topological, fishburn_utility_1970, keeney_decisions_1976}. They show that under certain conditions, there exists an additive utility function. Note that the set of trajectories is not a product space because many combinations of transitions are invalid. That is why we needed stronger axioms than those introduced in these earlier works.

Also, a condition called stationarity has been proposed, which is somewhat similar to our memorylessness axiom~\citep{koopmans1960stationary}. In the product space setting, one can view the product space as a time series, then stationarity states that changing the initial segment of two outcomes should not affect their comparison.

To our knowledge, there is only one work that focuses on extending utility theory to a general sequential decision making setting, namely~\citep{pitis2019rethinking}. 
They add two axioms and one assumption to the VNM axioms to obtain the equivalent of our \VNMS Theorem, whereas we only add a single axiom. They also consider the outcome space to be the set of state and policy pairs which is a large continuous high-dimensional space whereas we use the set of finite trajectories which is a countable space. In these regards, we believe that our approach is simpler. We also go beyond Affine-Reward MDPs and provide theorems for (additive) MDPs and goal-directed agents.

Another relevant attempt at a generalization of MDPs is through Constrained MDPs, which maximize a certain utility while satisfying constraints on other utilities~\citep{altman1999constrained}. See \citet{csaba} for implications for the reward hypothesis.

\section{Discussion}
\label{sec:discussion}

The reward hypothesis refers to ``goals and purposes'', but what exactly does that mean? If the goal is to achieve some desired behavior, specifically, a desired deterministic memoryless policy $\pi^\star$, then the hypothesis is true, because we can define the reward function as $r(s, a, s') = +1$ if $a = \pi^\star(s)$ and $-1$ otherwise.

In this work, we view rational preferences as a very precise specification of goals and purposes. Not only do they specify what behavior is optimal and what behavior is sub-optimal (in a 0-1 fashion), but they also allow us to compare any two behaviors, \ie we can say \emph{how good} a behavior is.

We have shown that, in this interpretation of the reward hypothesis, in the case of Markovian preferences, expected cumulative reward may not be enough to encode preferences and that an additional reward multiplier signal is also required. Only when our preferences satisfy the additivity axiom, in addition to being VNM-rational, does the additive reward suffice. This result can also be of importance to practitioners of inverse reinforcement learning as capturing an agent's preferences by a reward function may not produce adequate results unless we are sure that the agent's preferences adhere to the VNM+ axioms.

We may also think of alternative ways of defining goals and purposes; see, for example, \citet{abel2021expressivity}. These alternatives can usually be converted into preferences in a non-unique way because they are, essentially, incomplete preference specifications. 
If we can convert a set of goals into preference relations that satisfy the VNM+ axioms, then we can successfully express those goals via reward functions. Sometimes, a goal might not even be convertible to a Markovian preference relation. In this case, rewards and reward multipliers are not enough, and a memoryless policy may not be able to produce optimal behavior. One solution to consider, in this case, is modifying the state-space to include more information from the past.

We will now briefly mention some exciting avenues for future work. It would be interesting to study the possibility of extending these theorems to the setting of continuous state/action-space or continuous time (see \cref{sec:countability} for a proposal).

Another implication of our findings is the potential importance of AR-MDPs. Identification of important real-world scenarios where the \VNMS axioms hold but VNM+ may not hold, and the design of efficient learning algorithms for AR-MDPs merits investigation.

\section*{Acknowledgements}
We thank the anonymous reviewers for their valuable comments. This project is in part supported by the CIFAR AI chairs program and NSERC Discovery Grant. 

\bibliography{main}
\bibliographystyle{icml2022}

\newpage
\appendix
\onecolumn

\section{Countability of the Set of Outcomes}
\label{sec:countability}

The VNM utility theorem, in its original form, is only applicable when the set of outcomes is countable. We explain here why the set of finite trajectories is countable.

\begin{proposition}
The set of finite trajectories of a CMP, whose states and actions are countable, is countable.
\end{proposition}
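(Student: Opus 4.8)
The plan is to express $\mathcal{T}^*$ as a countable union of sets, each of which injects into a finite Cartesian power of the (countable) alphabet $\S \cup \A$, and then to invoke the two standard closure properties of countable sets: a finite product of countable sets is countable, and a countable union of countable sets is countable.

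First I would note that, by hypothesis, $\S$ and $\A$ are countable, hence so is $\S \cup \A$, and hence so is every finite power $(\S \cup \A)^k$. Next, for each $n \in \Nat$ let $\mathcal{T}^*_n$ be the set of trajectories of length $n$. A trajectory of length $n$ is a sequence $\langle (s_i,a_i,s'_i)\rangle_{i=1}^n$ subject to $s'_i = s_{i+1}$, so it is completely determined by the tuple $(s_1,a_1,s_2,a_2,\ldots,s_n,a_n,s_{n+1}) \in (\S\cup\A)^{2n+1}$. The map sending a length-$n$ trajectory to this tuple is therefore an injection $\mathcal{T}^*_n \hookrightarrow (\S\cup\A)^{2n+1}$, and since the target is countable, $\mathcal{T}^*_n$ is countable. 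The case $n=0$ is the set of empty trajectories $\{\epsilon_s : s \in \S\}$, which is in bijection with $\S$ and hence countable as well.

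Finally I would write $\mathcal{T}^* = \bigcup_{n=0}^\infty \mathcal{T}^*_n$, a countable union of countable sets, which is therefore countable. One should read "countable" here as "finite or countably infinite": $\mathcal{T}^*$ is countably infinite whenever the CMP admits arbitrarily long trajectories and finite otherwise, and the argument covers both cases.

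I do not anticipate a genuine obstacle: this is routine bookkeeping with countable sets. The only points that merit a moment's care are that the consistency constraint $s'_i = s_{i+1}$ only restricts $\mathcal{T}^*_n$ to a subset of a countable set (and subsets of countable sets are countable), so it can do no harm, and that one must remember to include the length-$0$ trajectories, which contribute exactly a copy of $\S$.
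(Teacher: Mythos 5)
Your proof is correct, but it takes a genuinely different route from the paper's. You stratify $\mathcal{T}^*$ by length, inject each stratum $\mathcal{T}^*_n$ into the finite power $(\S\cup\A)^{2n+1}$, and conclude by the closure of countability under finite products and countable unions. The paper instead fixes an enumeration of $\S$ and $\A$ by positive integers, encodes every non-empty finite trajectory as a non-empty finite sequence of positive integers, and then invokes the continued-fraction bijection between such sequences and $\mathbb{Q}_{\geq 1}$, thereby injecting all trajectories at once into a single countable set with no union over lengths. The two arguments buy slightly different things: yours is the more elementary and self-contained bookkeeping argument (and, since your injections are explicit once an enumeration of $\S\cup\A$ is fixed, the countable-union step needs no appeal to choice); the paper's is a slicker one-shot encoding that sidesteps the union entirely, at the cost of quoting the continued-fraction bijection as a known fact. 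You are also more careful than the paper in one small respect: you explicitly account for the length-$0$ trajectories $\{\epsilon_s : s\in\S\}$, which the paper's ``non-empty'' encoding leaves to the reader.
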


\begin{proof}
Because the set of states and actions of the CMP are countable, they are isomorphic to $\mathbb{Z}_{\geq 1}$. Consequently, there is a one-to-one mapping of non-empty finite trajectories to non-empty finite sequences of positive integers. If we consider the \emph{continued fraction representation} of real numbers, there is a bijection between $\mathbb{Q}_{\geq 1}$ and non-empty finite sequences of positive integers. Therefore, the set of finite trajectories fits inside $\mathbb{Q}_{\geq 1}$, which is a countable set.
\end{proof}

To apply the VNM utility theorem to uncountable sets of outcomes, an additional axiom, known as the sure-thing principle~\citep[p.~77]{savage1954foundations}, is required.

\begin{axiom}[Sure-thing principle]

For all lotteries $L$ with probability measure $p$, lotteries $M$, and sets $\mathcal{X}$ such that $p(\mathcal{X}) = 1$,

\begin{equation}
    \forall x \in \mathcal{X}: x \pge M \implies L \pge M \qquad \mathrm{and} \qquad \forall x \in \mathcal{X}: x \ple M \implies L \ple M.
\end{equation}

\end{axiom}

Incorporating this axiom is one way that would allow us to include infinite trajectories as part of the set of outcomes or consider an uncountable set of states/actions or continuous time.

\section{Partially Specified Preferences}
\label{sec:partial-preferences}

Since preference relations are constrained, a subset of them may be enough to recover all preference relations. We identify one such interesting subset in this section. In particular, we will assume that only preferences over lotteries that start from a fixed initial state $s_0$ are known. Note that we are not assuming that the preference relation is incomplete, only that some of the preferences are not revealed to us. 

\begin{proposition}\label{thm:partial-preferences}
If a preference relation over lotteries of finite trajectories of a CMP satisfies the VNM+ axioms, knowing only preferences over lotteries that start from a fixed initial state $s_0$ uniquely determines all preferences over lotteries of trajectories reachable from state $s_0$.
\end{proposition}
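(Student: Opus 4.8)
The plan is to reduce everything to the additive utility guaranteed by \cref{thm:vnm+-utility}. That theorem gives a linear utility $u$ of the form $u(\tau) = \sum_{t \in \tau} r(t)$, unique up to positive scaling, and in particular $u(\epsilon) = 0$. The first step is to observe that the revealed data — namely the restriction of $\pge$ to lotteries supported on trajectories starting from $s_0$ — determines the restriction of $u$ to such trajectories up to a single positive scalar. Indeed, the class of lotteries supported on trajectories from $s_0$ is closed under mixtures and inherits the VNM axioms, so the VNM utility theorem yields a linear representation on it; this representation must be a positive affine transformation of $u$ restricted to those trajectories, and normalizing the value of the empty trajectory $\epsilon_{s_0}$ to $0$ (which is forced, since $u(\epsilon_{s_0}) = 0$) kills the additive constant, leaving only an unknown positive factor $\alpha$.

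The second step extends this determinacy to every trajectory reachable from $s_0$. If $\tau$ is a trajectory whose start state $s$ is reachable from $s_0$, choose any trajectory $\tau_0$ from $s_0$ to $s$; such a $\tau_0$ exists by the reachability hypothesis, and $\tau_0 \cdot \tau$ is a valid trajectory starting from $s_0$. Additivity of $u$ (\cref{eq:rec-add}) gives $u(\tau) = u(\tau_0 \cdot \tau) - u(\tau_0)$. The right-hand side is a difference of utilities of trajectories starting from $s_0$, hence determined up to the same factor $\alpha$ by the first step; moreover, again by additivity, the right-hand side does not depend on the choice of bridging trajectory $\tau_0$, so $u(\tau)$ is well defined and determined up to $\alpha$.

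The third step finishes the argument: since $u$ is linear, the utility of any lottery supported on trajectories reachable from $s_0$ equals the corresponding expectation of the values determined in step two, and is therefore itself determined up to the common factor $\alpha$. A uniform positive rescaling of utilities leaves the induced ordering unchanged, so the preference relation over all lotteries of trajectories reachable from $s_0$ is pinned down by the revealed preferences, which is exactly the claim.

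The work here is light, so the only real point of care — what I would call the main obstacle — is the bookkeeping that guarantees one and the same positive scalar $\alpha$ works simultaneously for every reachable trajectory, rather than a separately chosen affine transformation state by state, together with the check that $u(\tau_0 \cdot \tau) - u(\tau_0)$ is independent of $\tau_0$. Both are resolved cleanly by fixing $u(\epsilon) = 0$ once at the outset and using additivity, so no genuine difficulty arises.
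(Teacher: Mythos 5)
Your proposal is correct and follows essentially the same route as the paper's proof: construct a utility on trajectories starting from $s_0$ from the revealed preferences, then use additivity to recover $u(\tau) = u(\tau_0 \cdot \tau) - u(\tau_0)$ for any reachable trajectory, and conclude via linearity. You simply make explicit the bookkeeping about the common scaling factor and the independence from the bridging trajectory, which the paper leaves implicit.
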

 
\begin{proof}
From the known preferences, we can construct a utility function for trajectories starting from state $s_0$. Now, consider an arbitrary trajectory $\tau$ that starts in state $s$ (which is reachable from $s_0$) and ends in state $s'$. Let $\tau'$ be a trajectory starting from state $s_0$ and ending in state $s$. Then, because VNM+-utilities are additive, the utility of trajectory $\tau$ can be obtained as $u(\tau' \cdot \tau) - u(\tau')$. These utilities let us compare all lotteries of trajectories that are reachable from state $s_0$, and thus, all preferences are now determined.
\end{proof}

\section{Alternative Axioms}

In this section, we will explore a few alternative axioms.

Instead of the additivity axiom, one may employ memorylessness along with the following axiom.

\begin{axiom}
For all $\tau_1$, $\tau_2$ that end in state $s$, and all $\tau_3$, $\tau_4$ that start from state $s$,

$$
\frac{1}{2}\tau_1 \cdot \tau_3 + \frac{1}{2}\tau_2 \cdot \tau_4 \approx \frac{1}{2}\tau_1 \cdot \tau_4 + \frac{1}{2}\tau_2 \cdot \tau_3.
$$
\end{axiom}

It is easy to see that this axiom can replace additivity in the proof of the VNM+ theorem. This axiom has been mentioned in \citet{meyer1976} in the context of extending utility theory to real-valued time series.

It is possible to replace the path-obliviousness axiom with the additivity axiom and an axiom that says any two trajectories with the same start and end states are equivalent.

\end{document}